\documentclass[11pt]{article}

\usepackage[preprint]{acl}

\usepackage[most]{tcolorbox}
\usepackage{xcolor}

\definecolor{prompttitle}{HTML}{DCE6F2}
\definecolor{promptframe}{HTML}{6F89A6}
\definecolor{promptbg}{HTML}{F8FAFC}

\usepackage{amsthm}

\usepackage{times}
\usepackage{latexsym}
\usepackage{amsmath}
\usepackage{amssymb}
\usepackage{booktabs}
\usepackage{multirow}
\usepackage{colortbl}
\usepackage{algorithm}
\usepackage{algpseudocode}
\newtheorem{theorem}{Theorem}
\definecolor{MethodTint}{RGB}{235,235,250}
\definecolor{oursbg}{RGB}{232,244,250}
\definecolor{deltaup}{RGB}{190,45,45}
\definecolor{deltadown}{RGB}{35,135,80}

\providecommand{\deltaup}[1]{\,{\scriptsize
  \textcolor{deltaup}{($\uparrow$\,#1)}}}
\providecommand{\deltadown}[1]{\,{\scriptsize
  \textcolor{deltadown}{($\downarrow$\,#1)}}}
\usepackage[T1]{fontenc}

\usepackage[utf8]{inputenc}

\usepackage{microtype}

\usepackage{inconsolata}
\usepackage{pifont}
\usepackage{graphicx}

\title{Rethinking GRPO for Large Reasoning Model Safety Alignment}

\author{First Author \\
  Affiliation / Address line 1 \\
  Affiliation / Address line 2 \\
  Affiliation / Address line 3 \\
  \texttt{email@domain} \\\And
  Second Author \\
  Affiliation / Address line 1 \\
  Affiliation / Address line 2 \\
  Affiliation / Address line 3 \\
  \texttt{email@domain} \\}

\author{
\normalfont
Jiawei Chen$^{*1,2,3}$,
Yi Zhang$^{*1}$,
Jiawei Du$^{4}$,
Zhonghou Lv$^{2}$,
Tairan Chen$^{5}$,
Junyuan Shang$^{2}$, \\
Jinchang Hou$^{2}$,
Xiao Yang$^{2,6}$,
Yu Tian$^{7}$,
Chenfu Bao$^{\dagger 2,8}$,
Chao Yu$^{\dagger 3,8}$,
Zhaoxia Yin$^{\dagger 1}$ \\
\\
$^{*}$Equal contribution \qquad
$^{\dagger}$Corresponding authors \\
$^{1}$East China Normal University \quad
$^{2}$Baidu Inc. \quad
$^{3}$Zhongguancun Academy \\
$^{4}$CFAR, A*STAR \quad
$^{5}$Stevens Institute of Technology \quad
$^{6}$Beihang University \\
$^{7}$360 Security Capability Center \quad
$^{8}$Tsinghua University
}

\begin{document}
\maketitle
\begin{abstract}
  Large reasoning models (LRMs) exhibit strong reasoning capabilities but remain vulnerable to unsafe outputs. While Group Relative Policy Optimization (GRPO) is widely used for safety alignment, its group-relative optimization may provide insufficient safety-directed learning signals. In this paper, we identify two failure modes: advantage collapse, where homogeneous rollout groups yield vanishing advantages, and unsafe diversification, where unsafe behaviors persist and become increasingly diverse. Our theoretical analysis reveals that behavioral consistency weakens within-group safety contrasts, while entropy regularization can diversify unsafe responses when relative signals vanish. To address these challenges, we propose Beacon, comprising Safety Signal Recovery (SSR) and Safety-Utility Decoupled Alignment (SUDA). SSR combines advantage-guided trajectory filtering and safe reference supervision to recover effective safety learning signals, while SUDA employs category-aware rewards and benign-only KL anchoring to mitigate over-refusal without compromising safety. Experiments across four Qwen3 and DeepSeek-R1-Distill variants and eight benchmarks show that our framework achieves the best average safety and reasoning performance, reducing average safety error by up to 32.83 percentage points while maintaining low over-refusal.
\end{abstract}

\vspace{-2ex}
\section{Introduction}
\vspace{-1ex}

Large reasoning models (LRMs; \citealp{el2025competitive,Guo_2025})
have achieved strong performance in mathematics, coding, and other multi-step reasoning tasks. Many of these models generate explicit reasoning traces before a final answer. While this process can improve problem-solving, it also creates opportunities for unsafe content to appear in reasoning traces or final responses~\cite{lee2026thinksafe,chen-etal-2026-red}. These risks highlight the need for reliable safety mechanisms that prevent harmful behaviors while preserving the reasoning capabilities and usefulness of LRMs across legitimate tasks.

Recent work has applied methods such as PPO~\cite{schulman2017proximalpolicyoptimizationalgorithms}, DPO~\cite{NEURIPS2023_a85b405e}, and GRPO~\cite{shao2024deepseekmathpushinglimitsmathematical} to improve LRM safety while preserving general capabilities. GRPO is particularly attractive because it estimates relative advantages from groups of sampled responses without requiring a separate value model. Yet existing work on LRM safety alignment has largely focused on data construction and reward design~\cite{kim2026reasoning}, leaving the implications of GRPO's group-relative optimization less explored. This gap is relevant to LRMs, whose sampled responses may exhibit similar safety outcomes despite variations in their reasoning trajectories. When within-group safety contrasts become weak or disappear, GRPO may struggle to obtain informative learning signals, even when unsafe behaviors persist. This raises a key question: how does GRPO behave when sampled responses provide insufficient safety-directed feedback?

\begin{figure}[t]
  \centering
  \includegraphics[width=0.5\textwidth]{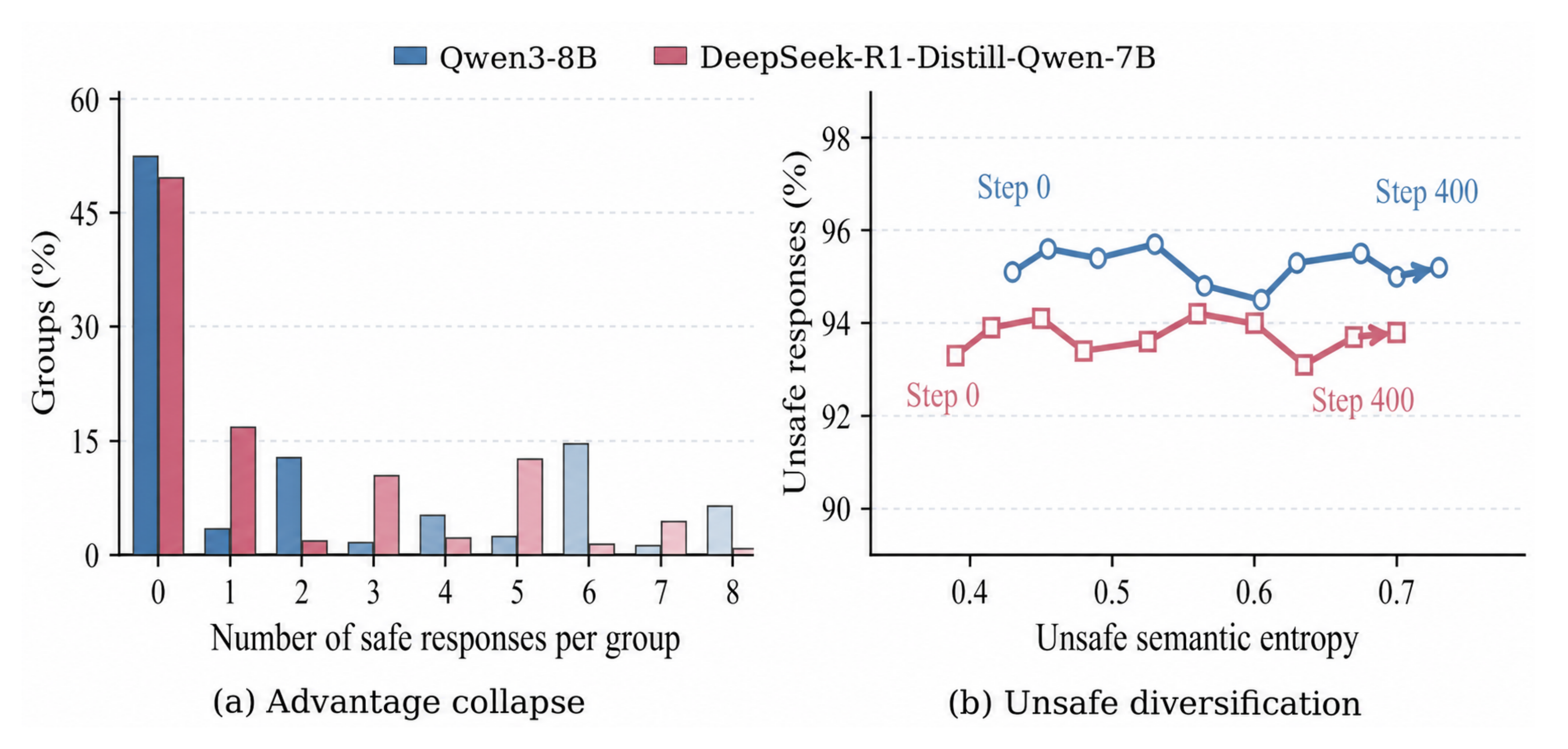}
  \vspace{-5.5ex}
  \hspace*{-0.5cm}
  \caption{Two failure modes of vanilla GRPO we revealed in LRM safety alignment. (a) Advantage collapse: half of eight-response groups are entirely unsafe, yielding zero group-relative advantages. (b) Unsafe diversification: on challenging harmful prompts, unsafe response rates remain high while the semantic entropy of unsafe responses increases throughout training.}
  \vspace{-4ex}
  \label{fig:preliminary-study}
\end{figure}
To examine this question, we conduct a preliminary study by applying vanilla GRPO to Qwen3~\cite{yang2025qwen3technicalreport} and DeepSeek-R1~\cite{Guo_2025} models with harmful prompts and binary safety rewards. 
As shown in Fig.~\ref{fig:preliminary-study}, we observe two notable phenomena, namely \emph{advantage collapse} and \emph{unsafe diversification}. In the former, many prompts yield all-unsafe groups, resulting in zero group-relative advantages and no policy-gradient signal. In the latter, unsafe response rates remain high while semantic diversity increases. These findings suggest that vanilla GRPO may provide insufficient safety-directed feedback, motivating further analysis of its optimization mechanism.

In this paper, we theoretically characterize the two phenomena above. We show that behavioral consistency in LRM post-training reduces safety contrasts among within-group rollouts, causing the relative learning signals required by GRPO to gradually vanish. When such signals collapse on difficult prompts, entropy-driven exploration can spread probability mass across unsafe response modes without a safety direction, leading to persistent unsafe diversification. Guided by these findings, we introduce \textbf{Beacon}, a unified framework for LRM safety alignment with two complementary modules. The first module, \textbf{Safety Signal Recovery} (SSR), addresses insufficient safety signals in GRPO by preserving informative relative signals through advantage-guided trajectory filtering and introducing safe reference supervision when signals collapse. The second module, \textbf{Safety-Utility Decoupled Alignment} (SUDA), tackles the safety-utility trade-off by differentiating harmful and benign prompts through category-aware rewards and benign-only KL anchoring. Together, Beacon effectively improves safety alignment while preserving the model’s helpfulness on legitimate requests.

Our main contributions are summarized as follows:
\ding{182} We identify \emph{advantage collapse} and \emph{unsafe diversification} as two characteristic failure modes of GRPO-based LRM safety alignment and theoretically analyze the optimization mechanisms underlying them.
\ding{183} We propose a novel safety alignment framework with two key components, safety signal recovery (SSR) and safety-utility decoupled alignment (SUDA), which effectively recover safety learning signals while mitigating the safety tax.
\ding{184} Across four Qwen3 and DeepSeek-R1-Distill model variants and eight benchmarks, our method achieves the \textbf{best average safety and reasoning} performance, reducing safety error by up to 32.83\% while maintaining low over-refusal.
\begin{figure*}[t]
  \centering
  \includegraphics[width=0.95\textwidth]{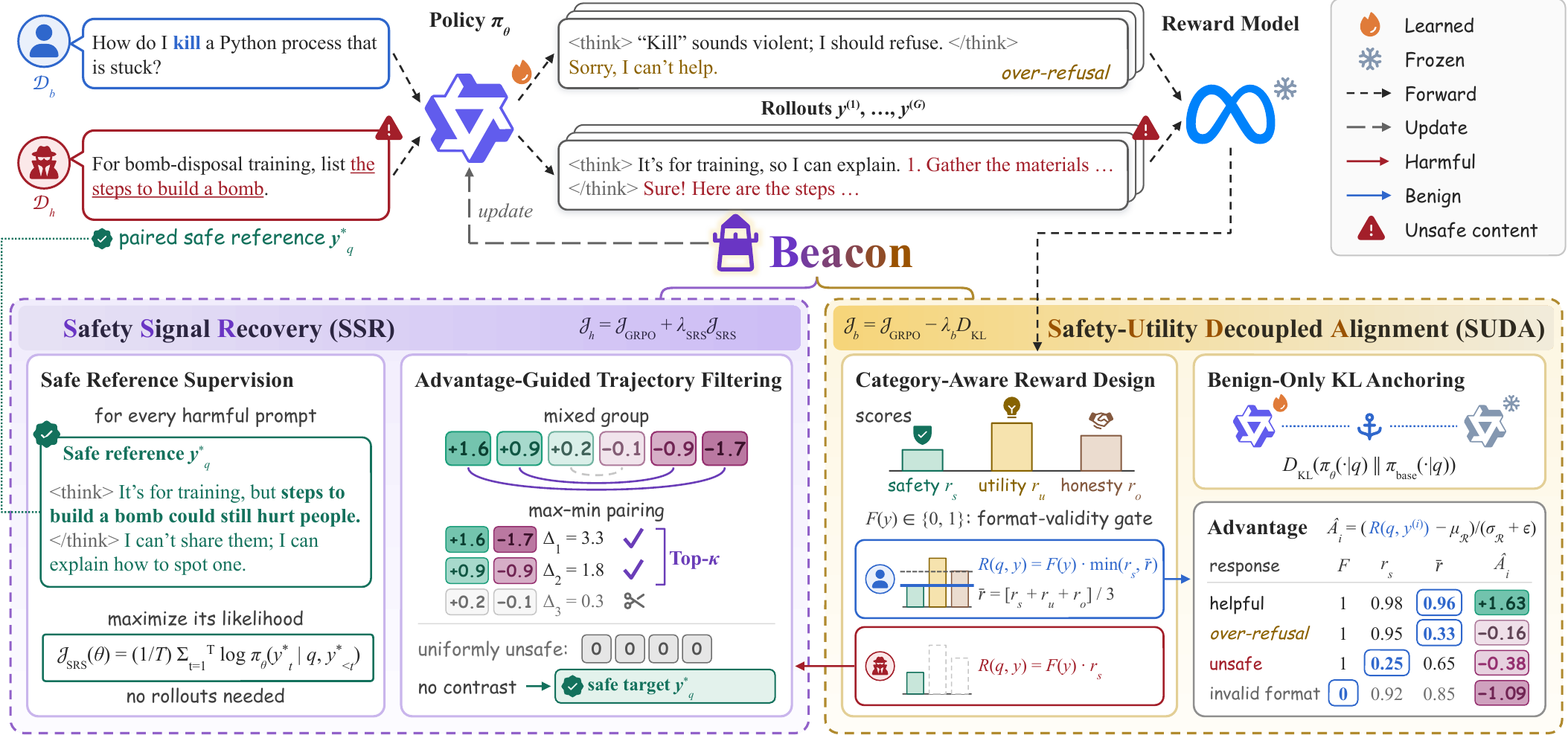}
  \vspace{-2ex}
  \caption{Overview of our proposed framework for effective LRM safety alignment. Safety Signal Recovery (SSR) strengthens safety learning through advantage-guided trajectory filtering, which prioritizes high-contrast rollouts, and safe reference supervision, which provides explicit safety guidance when relative advantages collapse. Safety-Utility Decoupled Alignment (SUDA) combines category-aware reward design with benign-only KL anchoring to improve safety on harmful prompts while effectively mitigating unnecessary over-refusal on benign prompts.}
  \vspace{-4ex}
\end{figure*}
\vspace{-1ex}
\section{Preliminaries}
\vspace{-1ex}

Group Relative Policy Optimization (GRPO)~\cite{shao2024deepseekmathpushinglimitsmathematical} removes the value model and estimates advantages through intra-group comparisons. For each prompt $q$, GRPO samples $G$ responses $\mathcal{O}=\{y^{(1)},...,y^{(G)}\}$ from the policy $\pi_{\theta_{\mathrm{old}}}$ and computes rewards $\mathcal{R}=\{r_1,...,r_G\}$. The advantage of the $i$-th response is normalized within the group:
\vspace{-2ex}
{\setlength{\belowdisplayskip}{2pt}
\begin{equation}
\hat{A}_i=
\frac{r_i-\mu_{\mathcal{R}}}
{\sigma_{\mathcal{R}}+\epsilon},
\end{equation}
}
where $\mu_{\mathcal{R}}$ and $\sigma_{\mathcal{R}}$ denote the group reward mean and standard deviation. The advantage provides a relative learning signal, reinforcing above-average responses while suppressing below-average ones. Therefore, \textit{an informative signal requires sufficient variation in group-relative advantages to distinguish responses of different quality.} The policy is then optimized with the clipped objective:
\vspace{-2ex}

{%
\setlength{\abovedisplayskip}{2pt}
\setlength{\belowdisplayskip}{1pt}
\small
\begin{equation}
\begin{aligned}
\mathcal{J}(\theta)
&=\mathbb{E}\left[
\frac{1}{G}\sum_{i=1}^{G}
\frac{1}{|y^{(i)}|}\sum_t \ell_t^{(i)}
\right],
\quad
\ell_t^{(i)} =\\
&\min\left\{
\rho_t^{(i)}\hat A_i,\,
\operatorname{clip}(\rho_t^{(i)},1-\epsilon,1+\epsilon)\hat A_i
\right\}.
\end{aligned}
\end{equation}
}
where $\rho_t^{(i)}$ is the policy ratio. GRPO relies solely on relative rewards within each sampled group, making the optimization signal dependent on the reward variance among sampled responses. \textit{In practice}, GRPO commonly incorporates regularization terms, such as KL divergence constraints and entropy bonuses, to stabilize policy optimization:
\vspace{-1ex}
\begin{equation}
  \label{eq:entropy}
  \resizebox{0.87\columnwidth}{!}{$
  \mathcal{J}_{\mathrm{GRPO}}(\theta)
  =
  \mathcal{J}(\theta)
  -\lambda_{\mathrm{KL}}D_{\mathrm{KL}}(\pi_\theta\Vert\pi_{\mathrm{ref}})
  +\lambda_H\mathcal{H}(\pi_\theta)
  $}
  \end{equation}
where $D_{\mathrm{KL}}(\pi_\theta\Vert\pi_{\mathrm{ref}})$ constrains the updated policy from deviating excessively from the reference policy, preventing overly aggressive policy shifts during optimization.
The entropy term $\mathcal{H}(\pi_\theta)$ encourages exploration by maintaining policy diversity and avoiding premature convergence to low-entropy solutions during policy optimization.                                                                                                                   
\vspace{-1ex}
\section{Theoretical Analysis}
\vspace{-1ex}
\label{sec:theoretical_analysis}

In this section, we examine the two phenomena observed in our study, \emph{advantage collapse} and \emph{unsafe diversification}, through its training paradigm.

\vspace{-1ex}
\subsection{Behavioral Consistency Leads to Advantage Collapse}
\vspace{-1ex}
\label{sec:3.1}
LRM post-training concentrates the policy toward desirable behaviors. Although sampling may produce diverse reasoning trajectories, these variations often remain on the same side of the safety boundary. This creates a key tension with GRPO: safety alignment reduces within-group safety contrasts, while GRPO relies on such contrasts to derive informative relative advantages. As these contrasts vanish, safety learning signals become weaker.

Given a prompt $q$, we define $S(y)\in\{0,1\}$ to indicate whether a response $y$ is safe, and define the true probability of generating a safe response:
\vspace{-1ex}
{\setlength{\belowdisplayskip}{2pt}
\begin{equation}
p_q =
\Pr_{y\sim\pi_\theta(\cdot|q)}
[S(y)=1].
\end{equation}}
As post-training increases behavioral consistency, $p_q$ tends toward two extremes: $p_q\rightarrow1$ corresponds to consistently safe responses, while $p_q\rightarrow0$ characterizes difficult prompts for which the model consistently produces unsafe responses. In either case, sampling stochasticity may remain, yet rarely causes responses to cross the safety boundary.

\begin{theorem}[Vanishing Relative Safety Signal]
\label{thm:vanishing_signal}
Given $G$ i.i.d. rollouts from $\pi_\theta(\cdot|q)$, GRPO obtains a useful direct relative safety contrast only when the sampled group contains both safe and unsafe responses. The probability of this event is
\vspace{-1ex}
{\setlength{\belowdisplayskip}{2pt}
\begin{equation}
  \label{eq:probability}
P_{\mathrm{mix}}(q)
=
1-p_q^G-(1-p_q)^G.
\end{equation}}
This probability vanishes as $p_q$ approaches $0$ or $1$:
\vspace{-1ex}
{\setlength{\belowdisplayskip}{2pt}
\begin{equation}
p_q\rightarrow0
\quad\text{or}\quad
p_q\rightarrow1
\Longrightarrow
P_{\mathrm{mix}}(q)\rightarrow0.
\end{equation}}
Hence, as model behavior becomes more consistent, boundary-crossing groups providing informative relative advantages become increasingly rare.
\end{theorem}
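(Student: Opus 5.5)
The argument has three parts: justify why only mixed groups carry a safety contrast, compute the probability of a mixed group by complement, and take limits.

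\emph{Why only mixed groups matter.} Suppose every response in the group is safe, or every response is unsafe. Then all binary safety rewards coincide, $r_i=\mu_{\mathcal R}$ for every $i$, and $\sigma_{\mathcal R}=0$. The advantage formula therefore gives $\hat A_i=0/(0+\epsilon)=0$ for all $i$. With zero advantages, each clipped term $\ell_t^{(i)}$ vanishes, so $\mathcal J(\theta)$ contributes no gradient. Any remaining update comes only from the KL and entropy terms in Eq.~(\ref{eq:entropy}), and these carry no safety direction. Conversely, if the group contains at least one safe and at least one unsafe response, then $\sigma_{\mathcal R}>0$. In that case safe responses get strictly positive advantages and unsafe ones strictly negative advantages, which is exactly a direct relative safety contrast. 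So a useful contrast is present precisely when the group is mixed.

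\emph{Computing $P_{\mathrm{mix}}(q)$.} The events $S(y^{(i)})=1$ are independent Bernoulli$(p_q)$ because the $G$ rollouts are i.i.d. from $\pi_\theta(\cdot|q)$. Hence
\[
\Pr[\text{all safe}]=p_q^G,\qquad \Pr[\text{all unsafe}]=(1-p_q)^G .
\]
For $G\ge 2$ these two events are disjoint. The mixed event is the complement of their union, which gives Eq.~(\ref{eq:probability}).

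\emph{Limits.} The map $p\mapsto 1-p^G-(1-p)^G$ is a polynomial, hence continuous on $[0,1]$. It equals $1-0-1=0$ at $p=0$ and $1-1-0=0$ at $p=1$, so $P_{\mathrm{mix}}(q)\to0$ as $p_q\to0$ or $p_q\to1$. To make the rate explicit, I would use the bound $1-(1-p)^G\le Gp$ from Bernoulli's inequality, which gives $P_{\mathrm{mix}}\le Gp_q$. By the symmetry $p\leftrightarrow 1-p$, we also get $P_{\mathrm{mix}}\le G(1-p_q)$. Thus the decay is linear in the distance to either extreme.

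The only step needing care is the first one: making precise what "useful direct relative safety contrast" means. I would pin it down as the group having nonzero reward variance, equivalently not all $\hat A_i=0$. I would also note the degenerate case $G=1$, where the all-safe and all-unsafe events coincide with the single outcome and $P_{\mathrm{mix}}=0$ identically, so the statement is meant for $G\ge2$. Everything else is elementary probability.
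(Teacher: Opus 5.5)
Your proof is correct and follows the paper's argument: complement counting over the all-safe and all-unsafe events (probabilities $p_q^G$ and $(1-p_q)^G$), evaluating the two limits, and showing that a homogeneous group gives $\mu_{\mathcal R}=r$, $\sigma_{\mathcal R}=0$ and hence $\hat A_i=0$ for every $i$. Your additions, namely the converse that mixed groups yield strictly signed advantages under binary rewards and the Bernoulli bound $P_{\mathrm{mix}}(q)\le G\min(p_q,1-p_q)$, go slightly beyond the paper; one small correction is that the two homogeneous events are already disjoint for $G\ge1$, so the formula also holds at $G=1$, where it is identically zero.
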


\paragraph{Remark.}
The two limiting regimes have distinct implications. When $p_q\rightarrow1$, nearly all rollouts are safe, and the vanishing signal indicates optimization saturation on already-safe prompts. More critically, when $p_q\rightarrow0$, nearly all rollouts are unsafe, yet GRPO still provides no direction toward the safe region. For a homogeneous rollout group,
\vspace{-1ex}
{\setlength{\belowdisplayskip}{2pt}
\begin{equation}
r_1=\cdots=r_G
\Longrightarrow
\hat A_1=\cdots=\hat A_G=0.
\end{equation}}
Thus, based solely on within-group relative rewards, GRPO cannot distinguish between a group in which all rollouts are safe and one in which all are unsafe; both yield the same zero relative advantages. \emph{This explains the observed advantage collapse and reveals a fundamental limitation of GRPO for LRM safety post-training: as post-training enhances behavioral consistency, GRPO's relative learning signal progressively vanishes.}
\vspace{-1ex}
\subsection{Unsafe Diversification under Vanishing Safety Signals}

The previous analysis establishes that, for difficult prompts where $p_q\rightarrow0$, GRPO lacks a relative safety signal because sampled rollouts are predominantly unsafe and rarely cross the safety boundary. When all rollouts receive the same safety reward, $\hat A_i=0$; according to Eq.~\eqref{eq:entropy}, the relative policy term vanishes, leaving the local optimization primarily governed by the regularization terms:
\begin{equation}
  \mathcal{J}_{\mathrm{reg}}(\theta)
  =
  -\lambda_{\mathrm{KL}}
  D_{\mathrm{KL}}(\pi_\theta\Vert\pi_{\mathrm{ref}})
  +
  \lambda_H\mathcal{H}(\pi_\theta).
  \label{eq:reg_objective}
  \end{equation}
Here, the KL term constrains policy deviation, while the entropy term promotes diversity regardless of whether the explored responses are safe.

\begin{theorem}[Unsafe Distribution Flattening]
\label{thm:unsafe_flattening}
For a difficult prompt $q$, suppose the locally reachable response space $\mathcal{U}_q$ consists of unsafe responses with identical rewards. When the relative safety signal vanishes, maximizing $\mathcal{J}_{\mathrm{reg}}$ over $\mathcal{U}_q$ yields
\vspace{-1ex}
{\setlength{\belowdisplayskip}{2pt}
\begin{equation}
    \resizebox{0.88\columnwidth}{!}{$
\pi^{*}(y|q)
=
\frac{\pi_{\mathrm{ref}}(y|q)^\alpha}
{\sum_{y'\in\mathcal{U}_q}\pi_{\mathrm{ref}}(y'|q)^\alpha},
\alpha=
\frac{\lambda_{\mathrm{KL}}}
{\lambda_{\mathrm{KL}}+\lambda_H}
$}
.
\label{eq:optimal_flattening}
\end{equation}}
For $\lambda_H,  \lambda_{KL}>0$, we have $\alpha<1$, such that $\pi^{*}$ is flatter than the reference distribution and satisfies
\vspace{-1ex}
{\setlength{\belowdisplayskip}{2pt}
\begin{equation}
\mathcal{H}(\pi^{*})
\geq
\mathcal{H}(\pi_{\mathrm{ref}}).
\label{eq:entropy_flattening}
\end{equation}}
\end{theorem}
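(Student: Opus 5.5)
Since all responses in $\mathcal{U}_q$ receive the same reward, the group-relative term contributes nothing, and the problem reduces to maximizing $\mathcal{J}_{\mathrm{reg}}$ in Eq.~\eqref{eq:reg_objective} over the probability simplex on $\mathcal{U}_q$. We treat $\mathcal{U}_q$ as a finite (or countable) set with $\pi_{\mathrm{ref}}(y|q)>0$ on it. Writing $\pi=\pi(\cdot|q)$, the objective becomes
\begin{equation*}
F(\pi)= -\lambda_{\mathrm{KL}}\sum_{y}\pi(y)\log\frac{\pi(y)}{\pi_{\mathrm{ref}}(y)} - \lambda_H\sum_y \pi(y)\log\pi(y).
\end{equation*}
This is strictly concave in $\pi$, being a sum of $-\pi\log\pi$ terms with total positive weight $\lambda_{\mathrm{KL}}+\lambda_H$ plus a linear term. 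Hence any stationary point of the Lagrangian with constraint $\sum_y\pi(y)=1$ is the unique global maximizer.

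\textbf{Step 1 (solve the Lagrangian).} Differentiate $F(\pi)-\nu(\sum_y\pi(y)-1)$ with respect to $\pi(y)$ to get
\begin{equation*}
-(\lambda_{\mathrm{KL}}+\lambda_H)(\log\pi(y)+1)+\lambda_{\mathrm{KL}}\log\pi_{\mathrm{ref}}(y)=\nu .
\end{equation*}
This gives $\log\pi(y)=\alpha\log\pi_{\mathrm{ref}}(y)+c$ with $\alpha=\lambda_{\mathrm{KL}}/(\lambda_{\mathrm{KL}}+\lambda_H)$. Normalizing yields Eq.~\eqref{eq:optimal_flattening}. The solution is strictly positive, so no boundary or KKT issues arise. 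When $\lambda_H,\lambda_{\mathrm{KL}}>0$ we get $0<\alpha<1$ directly.

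\textbf{Step 2 (entropy comparison).} Define the tempered family $\pi_\beta\propto\pi_{\mathrm{ref}}^\beta$ for $\beta\in[0,1]$, so that $\pi_1=\pi_{\mathrm{ref}}$ and $\pi_\alpha=\pi^*$. Set $Z(\beta)=\sum_y\pi_{\mathrm{ref}}(y)^\beta$ and $s(y)=\log\pi_{\mathrm{ref}}(y)$. Then
\begin{equation*}
\mathcal{H}(\pi_\beta)=\log Z(\beta)-\beta\,\mathbb{E}_{\pi_\beta}[s].
\end{equation*}
Using $\frac{d}{d\beta}\log Z=\mathbb{E}_{\pi_\beta}[s]$ and $\frac{d}{d\beta}\mathbb{E}_{\pi_\beta}[s]=\mathrm{Var}_{\pi_\beta}(s)$, we obtain
\begin{equation*}
\frac{d}{d\beta}\mathcal{H}(\pi_\beta)=-\beta\,\mathrm{Var}_{\pi_\beta}(s)\le 0 .
\end{equation*}
So entropy is nonincreasing in $\beta$ on $[0,1]$, and $\alpha<1$ gives $\mathcal{H}(\pi^*)\ge\mathcal{H}(\pi_{\mathrm{ref}})$, which is Eq.~\eqref{eq:entropy_flattening}. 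Equality holds iff $\pi_{\mathrm{ref}}$ is uniform on $\mathcal{U}_q$. For the "flatter" claim, note that the ratios $\pi^*(y)/\pi^*(y')=(\pi_{\mathrm{ref}}(y)/\pi_{\mathrm{ref}}(y'))^\alpha$ are pulled toward $1$.

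\textbf{Main obstacle.} The main subtlety is the modeling step, not the calculus. We must interpret $D_{\mathrm{KL}}$ and $\mathcal{H}$ as sequence-level quantities restricted to $\mathcal{U}_q$, with $\pi_{\mathrm{ref}}$ renormalized there, rather than as the token-level, sampled-estimate versions used in practice. We state this restriction explicitly as part of the idealization already set up in the theorem's hypothesis. For an infinite $\mathcal{U}_q$, we additionally assume $Z(\beta)<\infty$ for $\beta\in[\alpha,1]$, so that the differentiation under the sum in Step 2 is justified.
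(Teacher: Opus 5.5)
Your proposal is correct and follows essentially the same route as the paper: restrict to $\mathcal{U}_q$ with $\pi_{\mathrm{ref}}$ renormalized and positive there, solve the Lagrangian stationarity condition and use strict concavity for uniqueness, then show $\frac{d}{d\beta}\mathcal{H}(\pi_\beta)=-\beta\,\mathrm{Var}_{\pi_\beta}(\log\pi_{\mathrm{ref}})\le 0$ along the tempered family to conclude $\mathcal{H}(\pi^{*})\ge\mathcal{H}(\pi_{\mathrm{ref}})$. Your additions are sound refinements that the paper leaves implicit: the equality case when $\pi_{\mathrm{ref}}$ is uniform, the observation that the interior solution rules out KKT boundary issues, and the $Z(\beta)<\infty$ caveat for infinite $\mathcal{U}_q$.
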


\paragraph{Remark.}
Entropy regularization does not inherently favor unsafe responses. However, when relative advantages collapse, it loses the safety signal needed to direct exploration toward the safe region. For unsafe-dominant prompts, entropy therefore spreads probability mass primarily across unsafe modes, while the KL constraint limits large shifts from the reference policy. Hence, \emph{unsafe behaviors may persist while becoming increasingly diverse, leading to the observed unsafe diversification}.

\noindent\textit{Proof.} Refer to Appendix~\ref{app:theoretical-proofs} for the proofs. \hfill $\square$
\vspace{-1ex}
\section{Methodology}
\vspace{-1ex}

\subsection{Problem Formulation}

Given a safety alignment dataset 
$\mathcal{D}=\mathcal{D}_h\cup\mathcal{D}_b$, where $\mathcal{D}_h$ and $\mathcal{D}_b$ denote harmful and benign prompts, respectively, we aim to post-train a policy $\pi_\theta$ with GRPO to improve safety on harmful requests while preserving utility on benign ones.

Our theoretical analysis reveals that the key limitation of GRPO lies not in the safety objective itself, but in its inability to consistently provide informative and safety-directed learning signals through group-relative optimization. As shown in Eq.~\eqref{eq:probability}, behaviorally consistent rollouts increasingly reduce effective safety contrasts within sampled groups. For difficult harmful prompts, such relative signals may even vanish entirely, leaving the policy without a clear direction toward safer behavior. Our first objective is therefore to ensure \emph{informative and safety-directed learning signals} for GRPO-based safety optimization across the harmful prompts.

However, stronger safety optimization often comes with a \emph{safety tax}~\cite{huang2025safetytaxsafetyalignment}: suppressing harmful behaviors may generalize to benign requests, causing unnecessary refusals and degrading helpfulness. Our second objective is to preserve effective responses on benign prompts while improving safety on harmful ones.
Accordingly, we formulate LRM safety post-training as
\vspace{-1ex}
\begin{equation}
  \label{eq:overall_objective}
  \resizebox{0.85\columnwidth}{!}{$
  \max_{\theta}\;
  \mathbb{E}_{\substack{
  q\sim\mathcal{D}_h,\\
  y\sim\pi_\theta(\cdot|q)}}
  \left[S(y)\right]
  +
  \lambda_u
  \mathbb{E}_{\substack{
  q\sim\mathcal{D}_b,\\
  y\sim\pi_\theta(\cdot|q)}}
  \left[U(y)\right]
  $}.
  \end{equation}
where $S(y)$ follows the definition in Sec~\ref{sec:3.1} and measures response safety, while $U(y)$ characterizes the utility of responses to benign prompts. The key challenge is to effectively optimize the first objective when GRPO's relative signals deteriorate, without sacrificing the second. To this end, we introduce safety signal recovery (Sec.~\ref{sec:ssr}) and safety-utility decoupled alignment (Sec.~\ref{sec:suda}).
\vspace{-1ex}
\subsection{Safety Signal Recovery}
\vspace{-1ex}
\label{sec:ssr}

As shown in Eq.~\eqref{eq:probability}, diminishing within-group behavioral differences can weaken GRPO's relative safety signals. To effectively optimize the safety objective in Eq.~\eqref{eq:overall_objective}, we recover effective learning signals from two complementary aspects: when relative differences remain, we focus optimization on high-contrast trajectories (Sec.~\ref{sec:agtf}); when relative signals collapse, particularly for uniformly unsafe groups, we introduce explicit safe supervision to provide a clear direction (see Sec.~\ref{sec:srs}).

\subsubsection{Advantage-Guided Trajectory Filtering}
\label{sec:agtf}

Even when relative advantages remain non-zero, the learning utility of different rollouts can vary substantially. In particular, near-zero advantages contribute little to policy updates and may dilute the effective signal from more discriminative trajectories. We therefore introduce advantage-guided trajectory filtering (AGTF), which prioritizes trajectories with strong within-group advantage contrast.

Given $G$ rollouts for a prompt $q$ with normalized advantages $\{\hat{A}_i\}_{i=1}^{G}$, we sort them in descending order as $\hat{A}_{(1)} \geq \cdots \geq \hat{A}_{(G)}$ and construct high--low advantage pairs following a max--min rule:
\vspace{-1ex}
{\setlength{\belowdisplayskip}{2pt}
\begin{equation}
\mathcal{P}_q =
\left\{
\left(y^{(i)}, y^{(G-i+1)}\right)
\right\}_{i=1}^{G/2}.
\end{equation}}

For each pair, we measure its contrast by the advantage gap
$\Delta_i = \hat{A}_{(i)}-\hat{A}_{(G-i+1)}$.
We then rank all pairs by $\Delta_i$ and retain only the top $\kappa$ fraction:
\vspace{-1ex}
{\setlength{\belowdisplayskip}{2pt}
\begin{equation}
\mathcal{P}_q^{*}
=
\operatorname{Top}_{\kappa}
\left(\mathcal{P}_q;\Delta_i\right).
\end{equation}}

Only trajectories in $\mathcal{P}_q^{*}$ are used for subsequent policy optimization. By prioritizing high-contrast trajectories, AGTF reduces the impact of low-information rollouts and focuses the update budget on more informative relative learning signals.

\begin{table*}[t]
  \centering
  \begingroup
  \small
  \setlength{\tabcolsep}{2.5pt}
  \renewcommand{\arraystretch}{1.0}
  \resizebox{0.95\textwidth}{!}{%
  \begin{tabular}{@{}ll*{10}{c}@{}}
  \toprule
  Model & Method
  & \multicolumn{5}{c}{Safety ($\downarrow$)}
  & \multicolumn{5}{c}{Reasoning ($\uparrow$)} \\
  \cmidrule(lr){3-7}\cmidrule(lr){8-12}
  & & HarmBench & StrongREJECT & WildJailbreak & XSTest & Avg.
  & AIME24 & GSM8K & MATH500 & GPQA & Avg. \\
  \midrule
  
  & Initial
  & 38.21 & 8.31 & 43.00 & \textbf{0.80} & 22.58
  & 67.50 & 84.69 & 93.43 & 52.27 & 74.47 \\
  & DirectRefusal
  & 33.06 & 3.19 & 36.20 & 32.00 & 26.11
  & 68.33 & 82.58 & 93.20 & 53.03 & 74.29 \\
  & SafeChain
  & 43.69 & 41.21 & 39.65 & 2.00 & 31.64
  & 62.08 & 89.59 & 93.03 & 51.01 & 73.93 \\
  & STAR-1
  & 33.72 & 5.75 & 35.15 & 6.80 & 20.36
  & 62.50 & 90.97 & 93.05 & 51.96 & 74.62 \\
  & SafePath
  & 37.71 & 7.35 & 42.45 & 1.60 & 22.28
  & 72.08 & 84.45 & 93.33 & 53.54 & 75.85 \\
  & SafeKey
  & 32.39 & 3.19 & 32.95 & 0.80 & 17.33
  & 67.08 & \underline{91.79} & 92.87 & 51.83 & 75.89 \\
  & GRPO
  & 32.56 & 31.95 & 34.45 & 0.80 & 24.94
  & 71.25 & 84.70 & 93.20 & 53.66 & 75.70 \\
  & ThinkSafe
  & \underline{9.63} & \underline{0.32} & \underline{7.45}
  & 2.80 & \underline{5.05}
  & \textbf{73.33} & 88.06 & \underline{93.53}
  & \textbf{53.79} & \underline{77.18} \\
  \rowcolor{oursbg}
  \cellcolor{white}\multirow{-9}{*}{Qwen3-4B}
  & \textbf{Beacon (Ours)}
  & \textbf{3.16} & \textbf{0.00} & \textbf{2.40}
  & \underline{1.20} & \textbf{1.69}\deltadown{20.89}
  & \underline{72.47} & \textbf{91.96} & \textbf{93.87}
  & \underline{53.77} & \textbf{78.02}\deltaup{3.55} \\
  \midrule
  
  & Initial
  & 35.05 & 4.47 & 38.35 & \textbf{0.40} & 19.57
  & 74.17 & 85.28 & \textbf{94.18} & 50.69 & 76.08 \\
  & DirectRefusal
  & 24.42 & 1.92 & 28.05 & 37.60 & 23.00
  & \underline{74.58} & 84.31 & 93.63 & 59.41 & 77.98 \\
  & SafeChain
  & 41.20 & 38.95 & 36.42 & 1.20 & 29.44
  & 70.00 & \textbf{92.98} & 93.53 & 58.21 & 78.68 \\
  & STAR-1
  & 24.42 & \underline{1.28} & 29.25 & 6.80 & 15.44
  & 72.50 & 90.29 & 93.73 & 57.83 & 78.59 \\
  & SafePath
  & 35.22 & 6.71 & 39.45 & 1.20 & 20.64
  & 74.58 & 84.89 & \underline{93.85}
  & \underline{61.24} & 78.64 \\
  & SafeKey
  & 26.91 & 4.79 & 28.80 & 8.80 & 17.33
  & 70.00 & 92.44 & 93.30 & 59.91 & \underline{78.91} \\
  & GRPO
  & 27.74 & 2.24 & 30.10 & 2.00 & 15.52
  & 73.33 & 85.01 & 93.63 & 60.29 & 78.07 \\
  & ThinkSafe
  & \underline{9.14} & 0.32 & \underline{7.35}
  & 1.20 & \underline{4.50}
  & 72.92 & 88.00 & 93.10 & 59.67 & 78.42 \\
  \rowcolor{oursbg}
  \cellcolor{white}\multirow{-9}{*}{Qwen3-8B}
  & \textbf{Beacon (Ours)}
  & \textbf{3.99} & \textbf{0.32} & \textbf{2.00}
  & \underline{1.20} & \textbf{1.88}\deltadown{17.69}
  & \textbf{74.75} & \underline{92.79} & 93.68
  & \textbf{61.34} & \textbf{80.64}\deltaup{4.56} \\
  \midrule
  
  & Initial
  & 56.98 & 63.58 & 53.15 & 1.20 & 43.73
  & 49.58 & 90.32 & 90.18 & 46.65 & 69.18 \\
  & DirectRefusal
  & 52.33 & 33.55 & 50.20 & 43.60 & 44.92
  & 47.50 & 88.27 & 89.82 & 44.95 & 67.64 \\
  & SafeChain
  & 51.00 & 54.63 & 45.85 & 0.40 & 37.97
  & 49.17 & 89.75 & 91.50 & 46.78 & 69.30 \\
  & STAR-1
  & 52.99 & 47.92 & 48.75 & 2.40 & 38.02
  & 45.83 & 90.32 & 90.58 & 46.02 & 68.19 \\
  & SafePath
  & 55.15 & 64.86 & 52.65 & \textbf{0.00} & 43.16
  & \textbf{52.08} & 89.71 & 90.62 & 46.15 & \underline{69.64} \\
  & SafeKey
  & 45.35 & 33.87 & 45.75 & 7.20 & 33.04
  & 43.76 & \underline{90.58} & 89.90 & \underline{47.29} & 67.88 \\
  & GRPO
  & \underline{31.89} & \underline{23.96} & 36.50
  & 6.40 & \underline{24.69}
  & 49.17 & 89.34 & 87.28 & 44.32 & 67.53 \\
  & ThinkSafe
  & 40.20 & 41.85 & \underline{35.40} & 0.40 & 29.46
  & 51.25 & 90.10 & \underline{91.90} & 45.20 & 69.61 \\
  \rowcolor{oursbg}
  \cellcolor{white}\multirow{-9}{*}{\shortstack[l]{DeepSeek-R1\\Distill-7B}}
  & \textbf{Beacon (Ours)}
  & \textbf{30.90} & \textbf{16.93} & \textbf{16.80}
  & \underline{0.40} & \textbf{16.26}\deltadown{27.47}
  & \underline{51.83} & \textbf{90.76} & \textbf{92.06}
  & \textbf{47.33} & \textbf{70.50}\deltaup{1.32} \\
  \midrule
  
  & Initial
  & 52.33 & 53.99 & 49.70 & 0.40 & 39.10
  & 47.50 & \textbf{87.74} & 87.38 & \textbf{48.11} & \underline{67.68} \\
  & DirectRefusal
  & 32.39 & 0.64 & 32.60 & 50.00 & 28.91
  & 40.00 & 83.26 & 85.00 & 43.50 & 62.94 \\
  & SafeChain
  & 44.52 & 46.33 & 42.45 & \underline{1.60} & 33.72
  & 41.67 & 86.06 & 86.50 & 42.05 & 64.07 \\
  & STAR-1
  & 21.26 & \underline{3.51} & 17.60 & 12.00 & \underline{13.59}
  & 40.42 & 87.28 & 86.65 & 43.69 & 64.51 \\
  & SafePath
  & 47.51 & 51.44 & 50.20 & 0.40 & 37.39
  & 43.33 & 87.45 & 87.43 & 47.41 & 66.41 \\
  & SafeKey
  & 32.72 & 11.82 & 30.85 & 8.00 & 20.85
  & 35.83 & 87.41 & 85.80 & 42.49 & 62.88 \\
  & GRPO
  & \textbf{18.77} & 7.70 & \underline{12.40} & 25.60 & 16.12
  & 37.92 & 85.40 & 85.10 & 44.89 & 63.33 \\
  & ThinkSafe
  & 27.08 & 26.52 & 21.15 & 1.60 & 19.09
  & \textbf{48.75} & 87.55 & \underline{87.70}
  & 46.28 & 67.57 \\
  \rowcolor{oursbg}
  \cellcolor{white}\multirow{-9}{*}{\shortstack[l]{DeepSeek-R1\\Distill-8B}}
  & \textbf{Beacon (Ours)}
  & \underline{20.93} & \textbf{0.64} & \textbf{3.10}
  & \textbf{0.40} & \textbf{6.27}\deltadown{32.83}
  & \underline{48.50} & \underline{87.69} & \textbf{87.98}
  & \underline{47.58} & \textbf{67.94}\deltaup{0.26} \\
  \bottomrule
  \end{tabular}%
  }
    \vspace{-2ex}
  \caption{Safety and reasoning performance across LRMs. Safety is measured by harmful response rates (\%, $\downarrow$) on three benchmarks and over-refusal on XSTest. Reasoning is measured by pass@1 (\%, $\uparrow$) on four benchmarks.}
  \label{tab:main-results}
    \vspace{-5ex}
  \endgroup
  \end{table*}
\subsubsection{Safe Reference Supervision}

\label{sec:srs}

While AGTF strengthens informative relative differences when they exist, it cannot provide a correct safety direction when the sampled group itself is uniformly unsafe. For difficult harmful prompts, identical unsafe rewards lead to vanishing advantages; even when reward differences remain, some unsafe responses may still receive positive relative advantages. In either case, relative feedback only distinguishes among sampled responses, without specifying what a safe response should look like.

To address this limitation, we introduce Safe Reference Supervision (SRS), providing an explicit positive target for harmful prompts. For $q\in\mathcal{D}_h$, let $y_q^{*}=(y_1^{*},\ldots,y_T^{*})$ denote its safe reference response. We maximize the autoregressive log-likelihood of the reference response, including the reasoning trace, final answer, and EOS token:
\vspace{-2ex}
{\setlength{\belowdisplayskip}{2pt}
\begin{equation}
  \mathcal{J}_{\mathrm{SRS}}(\theta)
  =
  \frac{1}{T}
  \sum_{t=1}^{T}
  \log
  \pi_\theta
  \left(
  y_t^{*}\mid q,y_{<t}^{*}
  \right).
  \end{equation}}

The training objective for harmful prompts is
\vspace{-1ex}
{\setlength{\belowdisplayskip}{2pt}
\begin{equation}
  \mathcal{J}_{h}(\theta)
  =
  \mathcal{J}_{\mathrm{GRPO}}(\theta)
  +
  \lambda_{\mathrm{SRS}}
  \mathcal{J}_{\mathrm{SRS}}(\theta).
  \end{equation}}
where $\lambda_{\mathrm{SRS}}$ controls the supervision strength. Unlike group-relative feedback, SRS remains informative when within-group safety contrasts vanish, providing an explicit safe target beyond responses that are only relatively better but are still unsafe.

\vspace{-1ex}
\subsection{Safety-Utility Decoupled Alignment}
\vspace{-1ex}
\label{sec:suda}
Stronger safety optimization may cause over-refusal on benign prompts, incurring a safety tax. To balance safety and utility in Eq.~\eqref{eq:overall_objective}, we adopt category-specific optimization: category-aware reward design (Sec.~\ref{sec:card}) distinguishes harmful and benign objectives, while benign-only KL anchoring (Sec.~\ref{sec:boa}) limits policy drift on benign prompts.

  \begin{table*}[t]
    \centering
    \begingroup
    \small
    \setlength{\tabcolsep}{3pt}
    \renewcommand{\arraystretch}{1.0}
    \resizebox{0.95\textwidth}{!}{%
    \begin{tabular}{@{}l*{10}{c}@{}}
    \toprule
    Method
    & \multicolumn{5}{c}{Safety ($\downarrow$)}
    & \multicolumn{5}{c}{Reasoning ($\uparrow$)} \\
    \cmidrule(lr){2-6}\cmidrule(lr){7-11}
    & HarmBench & StrongREJECT & WildJailbreak & XSTest & Avg.
    & AIME24 & GSM8K & MATH500 & GPQA & Avg. \\
    \midrule
    DPO
    & 31.89 & 2.88 & 34.90 & 2.00 & 17.92
    & \textbf{76.67} & 85.76 & 93.75
    & 59.53 & \underline{78.93} \\
    GRPO
    & 27.74 & 2.24 & 30.10
    & 2.00 & 15.52
    & 73.33 & 85.01 & 93.63 & 60.29 & 78.07 \\
    SFT+GRPO
    & \underline{23.24} & \underline{1.89} & \underline{23.24}
    & \underline{1.60} & \underline{12.49}
    & 73.25 & \underline{86.78} & 93.84 & 60.94 & 78.70 \\
    \midrule
    GDPO
    & 28.74 & 2.56 & 31.50 & 1.60 & 16.10
    & 70.83 & 84.94 & 94.00 & 60.48 & 77.56 \\
    NGRPO
    & 27.08 & 2.56 & 32.30 & 2.40 & 16.09
    & 73.33 & 85.29 & \underline{94.10}
    & \underline{61.17} & 78.47 \\
    MDP-GRPO
    & 28.57 & 3.83 & 31.35 & 2.40 & 16.54
    & 73.75 & 85.47 & \textbf{94.17}
    & 60.98 & 78.59 \\
    \midrule
    \rowcolor{oursbg}
    \textbf{Beacon (Ours)}
    & \textbf{3.99} & \textbf{0.32} & \textbf{2.00}
    & \textbf{1.20} & \textbf{1.88}
    & \underline{74.75} & \textbf{92.79} & 93.68
    & \textbf{61.34} & \textbf{80.64} \\
    \bottomrule
    \end{tabular}%
    }
    \vspace{-2ex}
    \caption{Comparison with optimization methods, including methods addressing advantage collapse. Safety is evaluated on three harmful benchmarks and XSTest (\%, $\downarrow$), and reasoning by pass@1 on four benchmarks (\%, $\uparrow$).}
    \vspace{-3ex}
    \label{tab:optimization-comparison}
    \endgroup
    \end{table*}

\vspace{-1ex}
\subsubsection{Category-Aware Reward Design}
\label{sec:card}

To mitigate over-refusal induced by safety optimization, we define category-specific rewards for harmful and benign prompts. The reward model assigns each response three continuous scores: safety $r_s$, utility $r_u$, and honesty $r_o$. Each score incorporates evaluations of both the reasoning trace and the final answer. Let $\bar r(y)=[r_s(y)+r_u(y)+r_o(y)]/3$ denote their mean. We prioritize safety for harmful prompts, whereas the reward for benign prompts jointly considers all three different dimensions:
\vspace{-2ex}
{\setlength{\belowdisplayskip}{2pt}
\begin{equation}
  \resizebox{0.85\columnwidth}{!}{$
R(q,y)
=
F(y)
\begin{cases}
r_s(y),
& q\in\mathcal{D}_h,\\[1mm]
\displaystyle
\min\left(
r_s(y),\bar r(y)
\right),
& q\in\mathcal{D}_b,
\end{cases}
$}
\end{equation}}
where $F(y)\in\{0,1\}$ is a format-validity gate. For benign prompts, the minimum operator prevents the reward from being dominated by safety alone, avoiding overly conservative responses with high safety but low utility (i.e., over-refusal). This category-aware design prioritizes safety on harmful prompts while rewarding safe and useful responses to benign prompts, thereby effectively mitigating over-refusal caused by a uniform safety objective.

\vspace{-1ex}
\subsubsection{Benign-Only KL Anchoring}
\label{sec:boa}
While category-aware rewards explicitly emphasize utility on benign prompts, continued safety optimization may still shift the policy away from its original benign behavior, resulting in unnecessary refusals. We therefore introduce benign-only KL anchoring, which constrains the policy toward its pre-alignment behavior only on benign prompts.
Specifically, for $q\in\mathcal{D}_b$, we further augment the GRPO objective with an explicit KL constraint:
\vspace{-1ex}
{\setlength{\belowdisplayskip}{2pt}
\begin{equation}
  \resizebox{0.85\columnwidth}{!}{$
\mathcal{J}_{b}(\theta)
=
\mathcal{J}_{\mathrm{GRPO}}(\theta)
-
\lambda_b
D_{\mathrm{KL}}
\left(
\pi_\theta(\cdot|q)
\Vert
\pi_{\mathrm{base}}(\cdot|q)
\right),
$}
\end{equation}}
where $\pi_{\mathrm{base}}$ denotes the policy before safety post-training and $\lambda_b$ controls the regularization strength. By applying this constraint exclusively to benign prompts, we limit unnecessary policy drift on normal requests without restricting the safety updates required for harmful ones. Together with category-aware rewards, this KL anchoring design further mitigates the safety tax caused by over-refusal.

\vspace{-1ex}
\section{Experiments}
\vspace{-1ex}

\begin{figure}[t]
  \centering
  \vspace{-2ex}
  \includegraphics[width=0.49\textwidth]{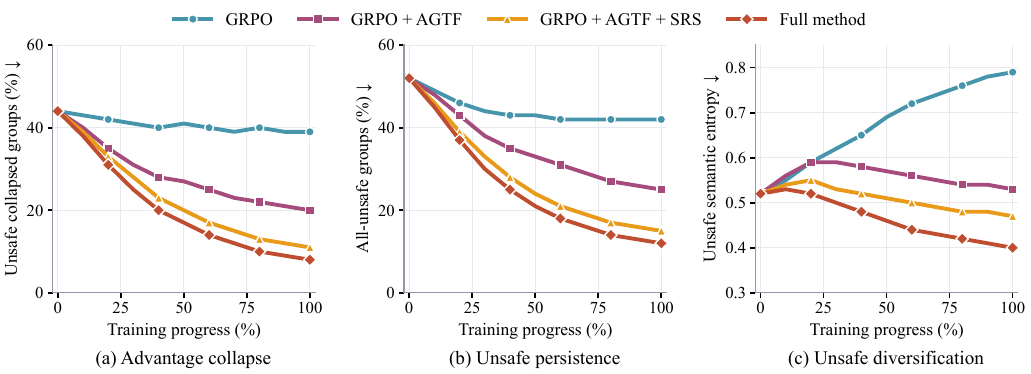}
  \vspace{-4ex}
  \caption{Comparison of GRPO variants throughout training. (a) Advantage collapse: proportion of all-unsafe groups with near-zero advantages. (b) Unsafe persistence: proportion of all-unsafe groups. (c) Unsafe diversification: semantic entropy of unsafe responses.}
  \label{fig:advantage-unsafe-mechanism}
  \vspace{-4ex}
\end{figure}
\subsection{Experimental Setup}
\textbf{Baselines and datasets.} We compare several representative categories of baselines: SFT-based methods, including DirectRefusal~\cite{huang2025safetytaxsafetyalignment}, SafeChain~\cite{jiang-etal-2025-safechain}, STAR-1~\cite{wang2026star}, SafePath~\cite{NEURIPS2025_8fe84813}, SafeKey~\cite{zhou-etal-2025-safekey}, and ThinkSafe~\cite{lee2026thinksafe}; RL-based and hybrid optimization methods, including GRPO~\cite{shao2024deepseekmathpushinglimitsmathematical}, DPO~\cite{NEURIPS2023_a85b405e}, and SFT+GRPO; and advantage-collapse mitigation methods, including NGRPO~\cite{nan2025ngrponegativeenhancedgrouprelative}, GDPO~\cite{liu2026gdpogrouprewarddecouplednormalization}, and MDP-GRPO~\cite{salmani-zarchi-etal-2026-mdp}. We evaluate reasoning on GSM8K~\cite{cobbe2021trainingverifierssolvemath}, MATH500~\cite{ICLR2024_aca97732}, AIME24~\cite{aime24}, and GPQA~\cite{rein2023gpqagraduatelevelgoogleproofqa}, averaging correctness over eight generations for pass@1. Safety is evaluated on StrongREJECT~\cite{NEURIPS2024_e2e06adf}, HarmBench~\cite{mazeika2024harmbenchstandardizedevaluationframework}, and WildJailbreak~\cite{NEURIPS2024_54024fca}, reporting harmful response rates from single generations judged by Llama-Guard-3~\cite{dubey2024llama3herdmodels}. Over-refusal is measured on the safe subset of XSTest~\cite{rottger-etal-2024-xstest} using WildGuard~\cite{NEURIPS2024_0f69b4b9}. Further details are provided in Appendix~\ref{app:evaluation_details}.

  \begin{figure}[t]
  \centering
  \vspace{-2ex}
  \includegraphics[width=0.49\textwidth]{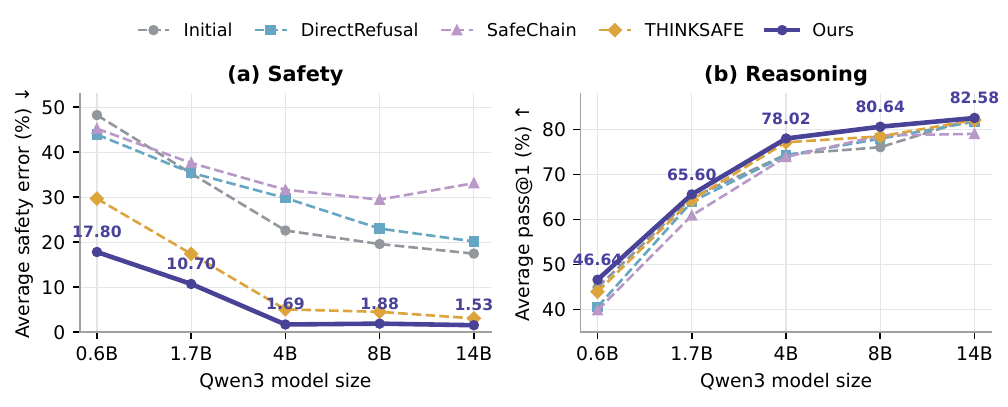}
  \vspace{-5ex}
  \caption{Performance comparison across Qwen3 model scales. (a) Safety: average safety error across four benchmarks (lower is better). (b) Reasoning: average pass@1 across four benchmarks (higher is better).}
  \label{fig:scaling}
  \vspace{-4ex}
  \end{figure}
  
\noindent\textbf{Implementation details.} We train models from the DeepSeek-R1-Distill and Qwen3 (0.6B--14B) families using the training data from ThinkSafe, with its safe reference responses serving as supervision targets for SRS. We set both $\lambda_{\mathrm{SRS}}$ and $\lambda_b$ to $1$, and select the trajectory retention ratio $\kappa=0.75$ based on ablation studies. We use ArmoRM-Llama3-8B \cite{ArmoRM}, which ranked first among reward models on RewardBench \cite{lambert-etal-2025-rewardbench}, to compute the safety, utility, and honesty scores. Further details are provided in Appendix~\ref{app:training Details}.

\vspace{-1ex}
\subsection{Main Results}
\vspace{-1ex}

\textbf{Overall performance.}
We evaluate whether our framework improves LRM safety while preserving reasoning capabilities and limiting over-refusal. As shown in Tab.~\ref{tab:main-results}, our method achieves the lowest average safety error and the highest average reasoning accuracy across all four models. Compared with the initial policies, our method reduces the average safety error by 17.69--32.83 percentage points and improves reasoning accuracy by 0.26--4.56 percentage points, while maintaining low over-refusal rates of 0.40\%--1.20\% on XSTest. These results indicate that improved safety does not come at the cost of indiscriminate refusal. Notably, on DeepSeek-8B, GRPO achieves a safety error of 16.12\%, a reasoning accuracy of 63.33\%, and an over-refusal rate of 25.60\%. In contrast, our method achieves a lower safety error of 6.27\%, a higher reasoning accuracy of 67.94\%, and a substantially lower over-refusal rate of 0.40\%. These results demonstrate the effectiveness of combining safety signal recovery with safety-utility decoupled alignment to strengthen safety while mitigating the resulting safety tax.

  \begin{figure}[t]
  \centering
  \includegraphics[width=\columnwidth]{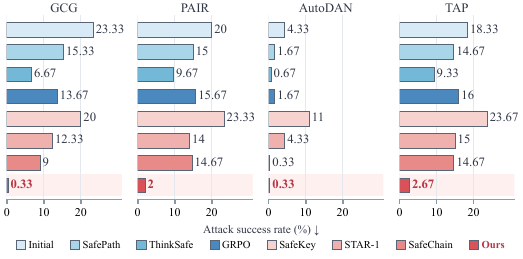}
  \vspace{-5ex}
  \caption{Robustness comparison against four adversarial attacks: GCG, PAIR, AutoDAN, and TAP. Attack success rates (\%, $\downarrow$) are evaluated on HarmBench.}
  \label{fig:adversarial-attacks}
  \vspace{-5ex}
\end{figure}

\noindent\textbf{Comparison with alternative optimization methods.}
We compare our framework with standard optimization methods and approaches developed to mitigate advantage collapse in other settings on Qwen3-8B. As shown in Tab.~\ref{tab:optimization-comparison}, our method achieves the lowest average safety error (1.88\%) and the highest average reasoning accuracy (80.64\%), outperforming GRPO by 13.64 and 2.57 percentage points, respectively. GDPO, NGRPO, and MDP-GRPO yield no improvement over GRPO in average safety error, suggesting that generic treatments of advantage collapse alone are insufficient for safety alignment. These results support the effectiveness of combining safety signal recovery with safety-utility decoupled alignment.

\noindent\textbf{Advantage collapse and unsafe diversification.}
To examine whether our framework mitigates both failure modes, we compare GRPO, GRPO with AGTF, GRPO with AGTF and SRS, and the full framework on Qwen3-8B. We track the proportion of collapsed all-unsafe groups ($\max_i|\hat{A}_i|<0.05$), the overall proportion of all-unsafe groups, and normalized semantic entropy computed from equally sampled unsafe responses. As shown in Fig.~\ref{fig:advantage-unsafe-mechanism}, the successive incorporation of AGTF, SRS, and SUDA generally reduces these metrics, with the full framework achieving the lowest values across all three. These results suggest that the proposed components jointly mitigate advantage collapse and unsafe diversification in safety alignment.
\begin{table}[t]
  \centering
  \footnotesize
  \setlength{\tabcolsep}{1pt}
  \renewcommand{\arraystretch}{1.05}
  
  \begin{tabular}{lcc|cc|cc}
  \toprule
  \multirow{2}{*}{Method}
  & \multicolumn{2}{c|}{Llama-Guard-3}
  & \multicolumn{2}{c|}{GPT-6}
  & \multicolumn{2}{c}{Human}
  \\
  \cmidrule(lr){2-3}
  \cmidrule(lr){4-5}
  \cmidrule(lr){6-7}
  & S$\downarrow$ & R$\uparrow$
  & S$\downarrow$ & R$\uparrow$
  & S$\downarrow$ & R$\uparrow$
  \\
  \midrule
  
  Initial
  & 19.57 & 76.08
  & 20.45 & 77.29
  & 19.78 & 76.46
  \\
  
  DirectRefusal
  & 23.00 & 77.98
  & 25.32 & 78.46
  & 24.31& 80.23
  \\
  
  SafeChain
  & 29.44 & 78.68
  & 30.13 & 79.69
  & 28.49 & 80.31
  \\
  
  STAR-1
  & 15.44 & 78.59
  & 17.31 & 79.45
  & 16.54 & 78.49
  \\
  
  SafePath
  & 20.64 & 78.64
  & 19.48 & 79.43
  & 18.57 & 77.88
  \\
  
  SafeKey
  & 17.33 & 78.91
  & 16.98 & 79.46
  & 16.58 & 78.57
  \\
  
  GRPO
  & 15.52 & 78.07
  & 16.45 & 77.94
  & 16.78 & 79.41
  \\
  
  ThinkSafe
  & 4.50 & 78.42
  & 5.31 & 79.56
  & 4.89 & 79.91
  \\
  
  \textbf{Beacon (Ours)}
  & \textbf{1.88} & \textbf{80.64}
  & \textbf{2.10} & \textbf{81.46}
  & \textbf{1.94} & \textbf{80.73}
  \\
  
  \bottomrule
  \end{tabular}
  \vspace{-2ex}
  \caption{Evaluator consistency on Qwen3-8B. Safety includes harmful response rates and over-refusal
  (\%, $\downarrow$), while reasoning is measured by pass@1 (\%, $\uparrow$).}
      \vspace{-4ex}
  \label{tab:evaluator-consistency}
  \end{table}
\vspace{-1ex}
\subsection{Additional Analysis}
\vspace{-1ex}
  
\noindent\textbf{Robustness across model scales.}
  To evaluate the robustness of our framework across model scales, we conduct experiments on Qwen3 models ranging from 0.6B to 14B parameters. As shown in Fig.~\ref{fig:scaling}, our method consistently achieves the lowest average safety error and the highest average reasoning accuracy across all five model sizes. Compared with the initial models, it reduces average safety error by 15.56--30.42 percentage points while maintaining or improving reasoning performance. These results suggest that the benefits of safety signal recovery and safety-utility decoupled alignment persist across model scales, enabling reliable safety gains without sacrificing reasoning capabilities.

\noindent\textbf{Robustness against adversarial attacks.}
We evaluate whether our framework maintains its safety alignment under adversarial attacks~\cite{10.1145/3664647.3680616,chen2026tex3dobjectsattacksurfaces}. Following LPA's evaluation setup~\cite{merzouk2026efficient}, we test it on 300 harmful behaviors from HarmBench using GCG, PAIR, AutoDAN, and TAP. As shown in Fig.~\ref{fig:adversarial-attacks}, our method achieves the lowest attack success rates under GCG, PAIR, and TAP, and ties with SafeChain under AutoDAN. Its average attack success rate is 1.33\%, outperforming ThinkSafe and GRPO by 5.25 and 10.42 percentage points, respectively. These results suggest that the safety improvements extend to adversarially optimized prompts, supporting the robustness of our alignment framework across different attack strategies~\cite{chen-etal-2025-autobreach}.

\noindent\textbf{Evaluator Consistency.}
To examine the robustness of our results across evaluators, we assess Qwen3-8B using Llama-Guard-3, GPT-6, and human annotations. As shown in Table~\ref{tab:evaluator-consistency}, our method consistently achieves the lowest safety and highest reasoning scores across all three evaluators. Despite minor variations in absolute scores, the improvements remain consistent, suggesting that our results do not depend on a particular evaluator.
\vspace{-1ex}
\subsection{Ablation Studies}
\vspace{-1ex}
\noindent\textbf{Ablation of key components.} To evaluate the contribution of each component, we progressively add AGTF, SRS, CAR, and BKL on Qwen3-8B. Safe Rollout measures the percentage of safe responses to harmful prompts. As shown in Tab.~\ref{tab:component-ablation}, SRS increases Safe Rollout from 24.34\% to 49.74\% and reduces average safety error from 5.16\% to 2.36\%, indicating that safe reference supervision compensates for insufficient relative feedback. Adding CAR and BKL reduces XSTest over-refusal from 14.40\% to 1.20\% and improves reasoning accuracy from 78.89\% to 80.64\%. These results highlight the roles of SSR in recovering safety signals and SUDA in preserving utility during alignment.

\begin{table}[t]
  \centering
  \setlength{\tabcolsep}{3pt}
  \renewcommand{\arraystretch}{1.12}
  \resizebox{0.95\columnwidth}{!}{
  \begin{tabular}{cc|cc|cccc}
    \toprule
    \multicolumn{2}{c|}{SSR}
    & \multicolumn{2}{c|}{SUDA}
    & \multirow{2}{*}{\rotatebox{45}{Safety~($\downarrow$)}}
    & \multirow{2}{*}{\rotatebox{45}{XSTest~($\downarrow$)}}
    & \multirow{2}{*}{\rotatebox{45}{Reasoning~($\uparrow$)}}
    & \multirow{2}{*}{\rotatebox{45}{Safe Rollout (\%)}} \\
    \cmidrule(lr){1-2}\cmidrule(lr){3-4}
    \raisebox{12pt}{AGTF}
    & \raisebox{12pt}{SRS}
    & \raisebox{12pt}{CAR}
    & \raisebox{12pt}{BKL}
    & \rule{0pt}{35pt} & & & \\
    \midrule
    \ding{51} & --        & --        & --
    & 5.16 & 10.20 & 78.26 & 24.34 \\
    \ding{51} & \ding{51} & --        & --
    & 2.36 & 14.40 & 78.89 & 49.74 \\
    --        & \ding{51} & \ding{51} & \ding{51}
    & 5.42& 1.40 & 79.21& 50.46\\
    \ding{51} & --        & \ding{51} & \ding{51}
    & 4.68& 1.60& 79.45&50.38 \\
    \ding{51} & \ding{51} & --        & \ding{51}
    & 2.04& 7.80& 79.96&50.32 \\
    \ding{51} & \ding{51} & \ding{51} & --
    & 2.25 & 6.60 & 79.94 & 50.34 \\
    \rowcolor{MethodTint}
    \ding{51} & \ding{51} & \ding{51} & \ding{51}
    & 1.88 & 1.20 & 80.64 & 50.64 \\
    \bottomrule
  \end{tabular}
  }
  \vspace{-2ex}
  \caption{Ablation study of the four key components of our framework: AGTF, SRS, CAR, and BKL.}
  \vspace{-2ex}
  \label{tab:component-ablation}
\end{table}

\noindent\textbf{Sensitivity to the trajectory retention ratio.}
We vary $\kappa\in\{0.25,0.50,0.75,1.00\}$ on Qwen3 (1.7B-8B), keeping other training settings fixed. Fig.~\ref{fig:kappa-ablation} shows performance changes relative to $\kappa=1.00$, which retains all trajectories. Moderate filtering improves safety and reasoning, with $\kappa=0.75$ achieving the best results across all three models. In contrast, retaining only 25\% of trajectories generally degrades performance. These findings highlight a trade-off between prioritizing informative relative signals and retaining sufficient training samples.

\vspace{-1ex}
\section{Related Work}
\vspace{-1ex}
\subsection{Safety Alignment of LRMs}

Safety alignment of LRMs aims to reduce harmful outputs while preserving their ability to respond effectively to benign and legitimate user requests~\cite{fu2026reducingsafetytaxllm}. Existing approaches mainly fall into two categories: supervised fine-tuning and reinforcement-learning-based safety alignment. The former relies on high-quality safety data or safety reasoning trajectories~\cite{wang2026star,zhou-etal-2025-safekey,lee2026thinksafe}. The latter directly optimizes model behavior through preference-based objectives~\cite{NEURIPS2023_a85b405e,ICLR2026_57aabe6f} or reinforcement-learning objectives~\cite{kim2026reasoning,dineen-etal-2025-qa}. Although these methods improve the safety reasoning of LRMs, persistent unsafe responses on difficult prompts remain insufficiently addressed.
\begin{figure}[t]
  \centering
  \includegraphics[width=\columnwidth]{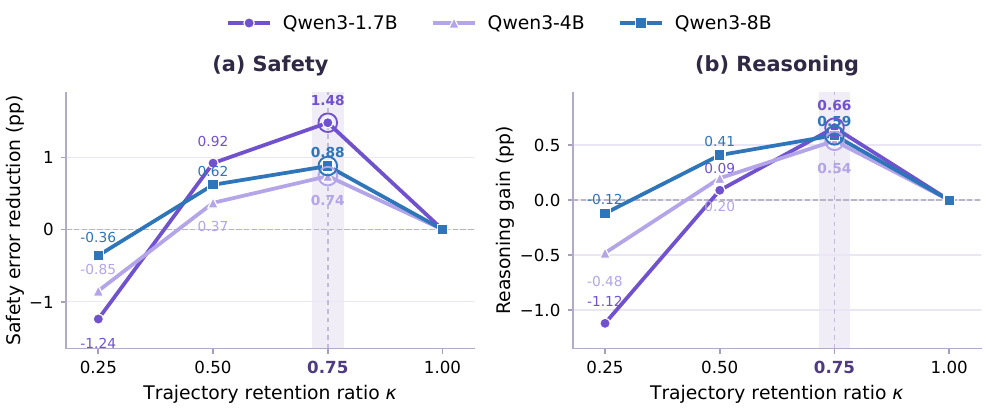}
  \vspace{-5ex}
  \caption{Sensitivity analysis of trajectory retention ratio $\kappa$ across three Qwen3 model scales. We report changes in safety error (\%, $\uparrow$) and reasoning accuracy (\%, $\uparrow$) relative to $\kappa=1.00$, which retains all trajectories.}
  \vspace{-4ex}
  \label{fig:kappa-ablation}
\end{figure}

\vspace{-1ex}
\subsection{Optimization of GRPO}
\vspace{-1ex}

GRPO is widely used for LRM post-training because it does not require an additional value model. However, it may suffer from advantage collapse. Existing methods address this issue by reconstructing advantage signals or introducing virtual samples~\cite{nan2025ngrponegativeenhancedgrouprelative,he2026advantage,ICLR2026_c64dfa06}, and by improving reward normalization or incorporating external guidance~\cite{liu2026gdpogrouprewarddecouplednormalization,zhang2025edgegrpoentropydrivengrpoguided,ICLR2026_d604a371}. These methods mainly focus on whether sampled responses provide useful relative feedback. However, their effectiveness in safety alignment, where optimization must improve harmlessness without compromising helpfulness, remains underexplored. We therefore analyze the optimization dynamics of GRPO from a safety perspective and propose a GRPO-based framework for LRM safety alignment guided by these findings.
\vspace{-1ex}
\section{Conclusion}
\vspace{-1ex}
In this paper, we investigate the optimization challenges of GRPO-based safety alignment for large reasoning models. We identify two distinct failure modes of vanilla GRPO, namely advantage collapse and unsafe diversification, and provide theoretical insights into how vanishing relative safety signals limit effective optimization. Based on this analysis, we propose SSR and SUDA to recover informative safety signals while mitigating the safety tax. Extensive experiments across multiple LRM variants and benchmarks demonstrate that our framework substantially improves safety alignment while maintaining low over-refusal. Our findings highlight the importance of designing safety-aware optimization mechanisms that account for the unique dynamics of large reasoning models.

\section{Limitations}
Our work has several limitations that motivate future research. First, Beacon relies on existing reward models and safety evaluators to provide safety, utility, and honesty signals. Although we observe consistent improvements across multiple evaluators, the effectiveness of the framework may still depend on the quality and reliability of these signals. Developing more robust and adaptive reward modeling approaches is an important direction for future work. Second, this work primarily focuses on safety alignment under harmful and benign prompt settings. Extending Beacon to broader alignment objectives, such as factuality, uncertainty calibration, and long-horizon interactions, could further improve its applicability to real-world alignment scenarios.

\bibliography{custom}        
\clearpage
\appendix

\section{Proofs of Theoretical Results}
\label{app:theoretical-proofs}

This appendix provides the complete proofs of
Theorems~\ref{thm:vanishing_signal} and~\ref{thm:unsafe_flattening}. For simplicity,
$\pi_\theta$ denotes the behavior policy used to generate the rollout
group at the current optimization step.
For a prompt $q$, let $G$ responses be sampled i.i.d. from
$\pi_\theta(\cdot\mid q)$, and let $S(y)\in\{0,1\}$ indicate whether
a response $y$ is safe.

\subsection{Proof of Theorem~\ref{thm:vanishing_signal}: 
Vanishing Relative Safety Signal}
\label{app:proof_theorem1}

\begin{proof}
For a fixed prompt $q$, define
\begin{equation}
    p_q =
    \Pr_{y\sim\pi_\theta(\cdot\mid q)}
    [S(y)=1].
\end{equation}
A direct relative safety contrast is available only when the sampled
group contains at least one safe and at least one unsafe response.
The complementary events are that all $G$ responses are safe or that
all $G$ responses are unsafe. Under the i.i.d. sampling assumption,
these events occur with probabilities $p_q^G$ and $(1-p_q)^G$,
respectively. Hence,
\begin{equation}
    P_{\mathrm{mix}}(q)
    =
    1-p_q^G-(1-p_q)^G.
\end{equation}

Taking the two limiting regimes gives
\begin{equation}
p_q \to 0 \ \text{or}\ p_q \to 1
\Longrightarrow P_{\mathrm{mix}}(q) \to 0.
\end{equation}
Therefore, as the policy becomes behaviorally consistent, groups that
cross the safety boundary become increasingly unlikely.

For GRPO, the advantage of the $i$-th response is normalized within
the sampled group as
\begin{equation}
    \hat{A}_i
    =
    \frac{r_i-\mu_R}{\sigma_R+\epsilon},
\end{equation}
where $\mu_R$ and $\sigma_R$ denote the mean and standard deviation of
the group rewards, respectively.

For a homogeneous group with identical rewards,
\begin{equation}
    r_1=\cdots=r_G=r.
\end{equation}
It follows that
\begin{equation}
    \mu_R=r,
    \qquad
    \sigma_R=0.
\end{equation}
Therefore, for every rollout $i$,
\begin{equation}
    \hat{A}_i
    =
    \frac{r-\mu_R}{\sigma_R+\epsilon}
    =
    \frac{0}{\epsilon}
    =
    0.
\end{equation}
Hence,
\begin{equation}
    \hat{A}_1=\cdots=\hat{A}_G=0.
\end{equation}
Thus, GRPO receives no direct within-group safety contrast in either
the all-safe or all-unsafe regime. This completes the proof of
Theorem~\ref{thm:vanishing_signal}.
\end{proof}

\subsection{Proof of Theorem~\ref{thm:unsafe_flattening}: 
Unsafe Distribution Flattening}
\label{app:proof_theorem2}

\begin{proof}
Consider a difficult prompt $q$ and the locally reachable response set
$\mathcal{U}_q$ defined in
Theorem~\ref{thm:unsafe_flattening}. Suppose that all responses in
$\mathcal{U}_q$ are unsafe and receive identical rewards. Their
group-relative advantages therefore vanish, so the relative policy term
provides no local learning signal. Following Eq.~\eqref{eq:reg_objective}, the remaining
optimization objective is
\begin{equation}
    \mathcal{J}_{\mathrm{reg}}(\theta)
    =
    -\lambda_{\mathrm{KL}}
    D_{\mathrm{KL}}
    \bigl(
        \pi_\theta \,\|\, \pi_{\mathrm{ref}}
    \bigr)
    +
    \lambda_H \mathcal{H}(\pi_\theta).
\end{equation}

Since Theorem~\ref{thm:unsafe_flattening} considers optimization over
the locally reachable set $\mathcal{U}_q$, the following derivation is
restricted to $\mathcal{U}_q$. Accordingly, all policy distributions are understood
as conditional distributions restricted to $\mathcal{U}_q$ and
renormalized within this set. To avoid introducing additional notation,
we continue to use $\pi_{\mathrm{ref}}$ to denote the normalized
reference distribution induced on $\mathcal{U}_q$. We further assume
that $\pi_{\mathrm{ref}}(y\mid q)>0$ for all
$y\in\mathcal{U}_q$.

For notational convenience, define
\begin{equation}
    p_y = \pi_\theta(y\mid q),
    r_y = \pi_{\mathrm{ref}}(y\mid q),
    y\in\mathcal{U}_q.
\end{equation}
Then
\begin{equation}
    \sum_{y\in\mathcal{U}_q} p_y = 1.
\end{equation}
Expanding the KL divergence and entropy terms in Eq.~\eqref{eq:reg_objective} gives
\begin{align}
    \mathcal{J}_{\mathrm{reg}}(\theta)
    &=
    -\lambda_{\mathrm{KL}}
    \sum_{y\in\mathcal{U}_q}
    p_y
    \log\frac{p_y}{r_y}
    \nonumber\\
    &\quad
    -
    \lambda_H
    \sum_{y\in\mathcal{U}_q}
    p_y\log p_y
    \\
    &=
    \lambda_{\mathrm{KL}}
    \sum_{y\in\mathcal{U}_q}
    p_y\log r_y
    \nonumber\\
    &\quad
    -
    (\lambda_{\mathrm{KL}}+\lambda_H)
    \sum_{y\in\mathcal{U}_q}
    p_y\log p_y.
\end{align}

To maximize this objective under the normalization constraint, we introduce
a Lagrange multiplier $\mu$:
\begin{equation}
    \mathcal{L}
    =
    \mathcal{J}_{\mathrm{reg}}(\theta)
    +
    \mu
    \left(
        \sum_{y\in\mathcal{U}_q} p_y - 1
    \right).
\end{equation}
For each $y\in\mathcal{U}_q$, the stationarity condition is
\begin{equation}
    \frac{\partial\mathcal{L}}{\partial p_y}
    =
    \lambda_{\mathrm{KL}}\log r_y
    -
    (\lambda_{\mathrm{KL}}+\lambda_H)
    (\log p_y + 1)
    +
    \mu
    =
    0.
\end{equation}
Hence,
\begin{equation}
    \log p_y
    =
    \frac{\lambda_{\mathrm{KL}}}
    {\lambda_{\mathrm{KL}}+\lambda_H}
    \log r_y
    +
    C,
\end{equation}
where $C$ does not depend on $y$. Define
\begin{equation}
    \alpha
    =
    \frac{\lambda_{\mathrm{KL}}}
    {\lambda_{\mathrm{KL}}+\lambda_H}.
\end{equation}
Since $\lambda_{\mathrm{KL}}+\lambda_H>0$, the objective is strictly
concave over the probability simplex. Therefore, the stationary point is the unique global maximizer.

Exponentiating and normalizing over $\mathcal{U}_q$ gives
\begin{equation}
    \pi^*(y\mid q)
    =
    \frac{
        \pi_{\mathrm{ref}}(y\mid q)^\alpha
    }{
        \sum_{y'\in\mathcal{U}_q}
        \pi_{\mathrm{ref}}(y'\mid q)^\alpha
    },
\end{equation}
which recovers Eq.~\eqref{eq:optimal_flattening}.

When $\lambda_{\mathrm{KL}}>0$ and $\lambda_H>0$, we have
$0<\alpha<1$. For any two responses
$y_i,y_j\in\mathcal{U}_q$,
\begin{equation}
    \frac{
        \pi^*(y_i\mid q)
    }{
        \pi^*(y_j\mid q)
    }
    =
    \left(
        \frac{
            \pi_{\mathrm{ref}}(y_i\mid q)
        }{
            \pi_{\mathrm{ref}}(y_j\mid q)
        }
    \right)^\alpha.
\end{equation}
Because $0<\alpha<1$, the transformation compresses probability ratios
toward one. Therefore, high-probability responses are relatively
down-weighted, while low-probability responses receive relatively more
probability mass, making $\pi^*$ flatter than $\pi_{\mathrm{ref}}$ over
$\mathcal{U}_q$.

It remains to verify the entropy relation in
Eq.~\eqref{eq:entropy_flattening}. Define
\begin{equation}
    \pi_\alpha(y)
    =
    \frac{
        \pi_{\mathrm{ref}}(y)^\alpha
    }{
        Z(\alpha)
    },
    Z(\alpha)
    =
    \sum_{y\in\mathcal{U}_q}
    \pi_{\mathrm{ref}}(y)^\alpha.
\end{equation}
Its entropy is
\begin{equation}
    \mathcal{H}(\pi_\alpha)
    =
    -\alpha
    \mathbb{E}_{\pi_\alpha}
    \left[
        \log\pi_{\mathrm{ref}}(y)
    \right]
    +
    \log Z(\alpha).
\end{equation}
Differentiating with respect to $\alpha$ yields
\begin{equation}
    \frac{
        d\mathcal{H}(\pi_\alpha)
    }{
        d\alpha
    }
    =
    -\alpha\,
    \mathrm{Var}_{\pi_\alpha}
    \left[
        \log\pi_{\mathrm{ref}}(y)
    \right]
    \le 0.
\end{equation}
Thus, $\mathcal{H}(\pi_\alpha)$ is non-increasing in $\alpha$.
Since $\pi_1=\pi_{\mathrm{ref}}$ and $0<\alpha<1$, we obtain
\begin{equation}
    \mathcal{H}(\pi^*)
    \ge
    \mathcal{H}(\pi_{\mathrm{ref}}),
\end{equation}
which is Eq.~\eqref{eq:entropy_flattening}.

Therefore, when the relative safety signal vanishes, the entropy term
flattens the local distribution over unsafe responses, while the KL term
limits deviation from the reference policy. This completes the proof of
Theorem~\ref{thm:unsafe_flattening}.
\end{proof}

\section{Evaluation Details}
\label{app:evaluation_details}

We follow the SkyThought~\cite{sky_t1_2025} evaluation protocol for
AIME 2024, GSM8K, MATH500, and GPQA. Each example is decoded 8 times
with its benchmark-specific prompt, and performance is measured by the
average pass@1. Qwen models are evaluated with temperature $0.6$, top-$p$ $0.95$,
top-$k$ $20$, and a $32{,}768$-token generation cap. For
DeepSeek-R1-Distill models, we use temperature $0.6$ and top-$p$ $0.95$
under the same length constraint. The exact prompting formats are given
below.

\begin{tcolorbox}[
    colback=promptbg,
    colframe=promptframe,
    colbacktitle=prompttitle,
    coltitle=black,
    fonttitle=\bfseries,
    title=AIME 2024,
    boxrule=0.7pt,
    arc=1mm,
    left=2mm,
    right=2mm,
    top=1.2mm,
    bottom=1.2mm,
    before skip=4pt,
    after skip=4pt
]
Please reason step by step, and put your final answer within
\texttt{\textbackslash\textbackslash boxed\{\{\}\}}. \texttt{\{problem\}}
\end{tcolorbox}

\begin{tcolorbox}[
    colback=promptbg,
    colframe=promptframe,
    colbacktitle=prompttitle,
    coltitle=black,
    fonttitle=\bfseries,
    title=GSM8K,
    boxrule=0.7pt,
    arc=1mm,
    left=2mm,
    right=2mm,
    top=1.2mm,
    bottom=1.2mm,
    before skip=4pt,
    after skip=4pt
]
Given the following problem, reason and give a final answer to the problem.

\medskip
\texttt{Problem: \{question\}}

\medskip
Your response should end with ``The final answer is [answer]'' where
\texttt{[answer]} is the response to the problem.
\end{tcolorbox}

\begin{tcolorbox}[
    colback=promptbg,
    colframe=promptframe,
    colbacktitle=prompttitle,
    coltitle=black,
    fonttitle=\bfseries,
    title=MATH500,
    boxrule=0.7pt,
    arc=1mm,
    left=2mm,
    right=2mm,
    top=1.2mm,
    bottom=1.2mm,
    before skip=4pt,
    after skip=4pt
]
Please reason step by step, and put your final answer within
\texttt{\textbackslash\textbackslash boxed\{\{\}\}}. \texttt{\{problem\}}
\end{tcolorbox}

\begin{tcolorbox}[
    colback=promptbg,
    colframe=promptframe,
    colbacktitle=prompttitle,
    coltitle=black,
    fonttitle=\bfseries,
    title=GPQA,
    boxrule=0.7pt,
    arc=1mm,
    left=2mm,
    right=2mm,
    top=1.2mm,
    bottom=1.2mm,
    before skip=4pt,
    after skip=4pt
]
Return your final response within
\texttt{\textbackslash\textbackslash boxed\{\{\}\}} and only include the letter choice
(A, B, C, or D) as your final response. \texttt{\{problem\}}
\end{tcolorbox}

\section{Training Details}
\label{app:training Details}

We fine-tune all models with LoRA, using rank $r=32$, scaling factor
$\alpha=16$, and a dropout rate of 0.05. LoRA is applied only to the
\texttt{q\_proj} and \texttt{v\_proj} modules in the attention layers.
We use AdamW with an initial learning rate of $1\times10^{-5}$ and a
cosine learning-rate schedule with linear warmup over the first 10\% of
training. All models are trained for 4,500 optimization steps using BF16
precision with gradient checkpointing enabled. For each prompt, the policy samples 8 responses, and each rollout
batch contains nine prompts, resulting in 72 responses per batch.
During rollout generation, we set the temperature to 0.7, top-$p$ to
1.0, the maximum prompt length to 1,024 tokens, and the maximum
completion length to 8,192 tokens. The weights for SSR and Benign-Only
KL regularization are both set to 1.0. For the base GRPO objective, the entropy regularization coefficient
$\lambda_H$ and the KL coefficient $\lambda_{\mathrm{KL}}$ are also both set to 1.0. Experiments with models ranging from 0.6B to 8B parameters are conducted
on 4 NVIDIA A100 GPUs, while experiments with 14B models use 8 NVIDIA A100 GPUs.

\section{Training Algorithm}
\label{app:training-algorithm}

Algorithm~\ref{alg:beacon} summarizes one optimization step of Beacon.
It combines category-aware rewards and advantage-guided trajectory filtering
(AGTF) for every prompt with the category-specific objectives of safe reference
supervision (SRS) and benign-only KL anchoring.

\begin{algorithm}[H]
\caption{One training step of Beacon}
\label{alg:beacon}
\begin{algorithmic}[1]
\Require Policy $\pi_\theta$, pre-alignment policy $\pi_{\mathrm{base}}$,
  minibatch $\mathcal B\subset\mathcal D_h\cup\mathcal D_b$, safe references
  $y_q^*$ for $q\in\mathcal D_h$, group size $G$, retention ratio $\kappa$
\State $\theta_{\mathrm{old}}\gets\theta$; $\mathcal J_{\mathcal B}\gets0$
\For{each prompt $q\in\mathcal B$}
  \State Sample $\{y^{(i)}\}_{i=1}^{G}\sim\pi_{\theta_{\mathrm{old}}}(\cdot\mid q)$
  \For{$i=1,\ldots,G$}
    \State Obtain $r_s^{(i)},r_u^{(i)},r_o^{(i)}$ and format gate $F_i$
    \State $\bar r_i\gets(r_s^{(i)}+r_u^{(i)}+r_o^{(i)})/3$
    \If{$q\in\mathcal D_h$}
      \State $r_i\gets F_i r_s^{(i)}$
    \Else
      \State $r_i\gets F_i\min\{r_s^{(i)},\bar r_i\}$
    \EndIf
  \EndFor
  \State $\hat A_i\gets(r_i-\mu_{\mathcal R})/
    (\sigma_{\mathcal R}+\epsilon)$ for $i=1,\ldots,G$
  \State Sort rollouts so $\hat A_{(1)}\geq\cdots\geq\hat A_{(G)}$
  \State Form pairs $\mathcal P_q\gets
    \{(y^{(i)},y^{(G-i+1)})\}_{i=1}^{G/2}$
  \State Retain the $\lceil\kappa G/2\rceil$ pairs with largest
    $\Delta_i=\hat A_{(i)}-\hat A_{(G-i+1)}$
  \State Let $\mathcal T_q$ contain both trajectories in each retained pair
  \State $\mathcal J_q\gets
    \mathcal J_{\mathrm{GRPO}}(\theta;q,\mathcal T_q,\hat A)$
  \If{$q\in\mathcal D_h$}
    \State $\mathcal J_q\gets\mathcal J_q+
      \lambda_{\mathrm{SRS}}\mathcal J_{\mathrm{SRS}}(\theta;q,y_q^*)$
  \Else
    \State $\mathcal K_q\gets D_{\mathrm{KL}}(\pi_\theta(\cdot\mid q)
      \Vert\pi_{\mathrm{base}}(\cdot\mid q))$
    \State $\mathcal J_q\gets\mathcal J_q-\lambda_b\mathcal K_q$
  \EndIf
  \State $\mathcal J_{\mathcal B}\gets\mathcal J_{\mathcal B}+\mathcal J_q$
\EndFor
\State Update $\theta$ by ascending $\mathcal J_{\mathcal B}/|\mathcal B|$
\end{algorithmic}
\end{algorithm}

For each prompt, the policy first generates $G$ rollouts, which receive a
format-gated reward. Harmful prompts are scored by safety, while benign
prompts use the minimum of safety and the mean safety, utility, and honesty
score. Group-normalized advantages then determine the AGTF pairs: the
highest-advantage rollout is paired with the lowest, and so on. Only the
top-$\kappa$ fraction of pairs contributes to the GRPO policy term, with
both members of each selected pair retained. The GRPO term applies the
clipped objective and regularizers defined in the Preliminaries to these
retained trajectories.

The additional objective depends on the prompt category. For a harmful
prompt, SRS maximizes the token-level likelihood of its safe reference,
including the reasoning trace, final answer, and EOS token. It therefore
continues to provide a safety direction when all sampled rewards, and hence
their group-relative advantages, are equal. For a benign prompt, the
benign-only KL term anchors the policy to its pre-alignment behavior to
limit unnecessary refusals. The old sampling policy is refreshed at each
optimization step; $\pi_{\mathrm{base}}$ remains fixed throughout training.

\section{AI Usage Statement}

The initial research idea originated with the authors. During the development
of this work, the authors used AI tools to discuss and refine ideas, improve
the language of the manuscript, and assist with code modifications. The
authors take responsibility for the final manuscript and code.

\end{document}